\newtheorem{theorem}{\bf Theorem}
\newenvironment{proof}{{\it Proof}.}{\hfill $\square$\par}
\definecolor{mygray}{gray}{.9}
\begin{document}
\title{Propagating Asymptotic-Estimated Gradients for Low Bitwidth Quantized Neural Networks}
\author{Jun~Chen, Yong~Liu,~\IEEEmembership{Member,~IEEE}, Hao~Zhang, Shengnan~Hou, Jian~Yang

	\thanks{Jun~Chen, Yong~Liu, Hao~Zhang and Shengnan~Hou are with the Institute of Cyber-Systems and Control, Zhejiang University, Hangzhou,
		China, 310027, e-mail: yongliu@iipc.zju.edu.cn. Jian~Yang is with China Research and Development Academy of Machinery Equipment, Beijing, China, 100089.}
	\thanks{Yong~Liu is the corresponding author.}
}

\maketitle
	
\begin{abstract}
The quantized neural networks (QNNs) can be useful for neural network acceleration and compression, but during the training process they pose a challenge: how to propagate the gradient of loss function through the graph flow with a derivative of 0 almost everywhere. In response to this non-differentiable situation, we propose a novel Asymptotic-Quantized Estimator (AQE) to estimate the gradient. In particular, during back-propagation, the graph that relates inputs to output remains smoothness and differentiability. At the end of training, the weights and activations have been quantized to low-precision because of the asymptotic behaviour of AQE. Meanwhile, we propose a M-bit Inputs and N-bit Weights Network (MINW-Net) trained by AQE, a quantized neural network with 1-3 bits weights and activations. In the inference phase, we can use XNOR or SHIFT operations instead of convolution operations to accelerate the MINW-Net. Our experiments on CIFAR datasets demonstrate that our AQE is well defined, and the QNNs with AQE perform better than that with Straight-Through Estimator (STE). For example, in the case of the same ConvNet that has 1-bit weights and activations, our MINW-Net with AQE can achieve a prediction accuracy 1.5\% higher than the Binarized Neural Network (BNN) with STE. The MINW-Net, which is trained from scratch by AQE, can achieve comparable classification accuracy as 32-bit counterparts on CIFAR test sets. Extensive experimental results on ImageNet dataset show great superiority of the proposed AQE and our MINW-Net achieves comparable results with other state-of-the-art QNNs.
\end{abstract}

\begin{IEEEkeywords}
Deep learning, quantized neural network (QNN), back-propagation, estimators
\end{IEEEkeywords}

\markboth{IEEE JOURNAL OF SELECTED TOPICS IN SIGNAL PROCESSING}%
{}

\section{Introduction}

\IEEEPARstart{R}{ecently}, deep convolutional neural networks (DCNNs) have substantially dominated a variety of computer vision fields such as image classification~\cite{Krizhevsky2012ImageNet,Szegedy2014Going,Simonyan2015VeryDC}, face recognition \cite{taigman2014deepface,sun2014deep}, semantic segmentation \cite{long2015fully,chen2014semantic} and object detection \cite{girshick2015fast,ren2015faster}. However, the dazzling performance of DCNNs is at the expense of a large number of parameters and high computational complexity, which place high demands on the storage unit and greatly drag down the computational efficiency when considering to apply them to embedded devices and hardware platforms \cite{tulloch2017high,sharma2018bit,park2018energy}.

In order to deal with this problem, some studies~\cite{courbariaux2016binarized,courbariaux2015binaryconnect,rastegari2016xnor,zhou2016dorefa,li2016ternary,lin2017towards,zhu2016trained,park2018precision,choi2018bridging,choi2018pact,liu2018bi,zhuang2018towards} have specifically considered to reduce the parameters and improve the computational efficiency by using quantized neural networks (QNNs). In general, QNNs can be roughly divided into three categories: $1)$ simultaneously quantizing weights and activations whose model is the simplest such as \cite{courbariaux2016binarized}. $2)$ only quantizing weights such as \cite{courbariaux2015binaryconnect,li2016ternary}. $3)$ introducing scale factors in addition to the quantized values to fit floating-point numbers \cite{zhu2016trained,rastegari2016xnor,zhou2016dorefa}.

In principle, we can train neural networks by using gradient-based learning algorithm which is a well-known back-propagation algorithm \cite{rumelhart1988learning}. In this algorithm, the gradients propagate through the graph flow related between inputs and outputs. Although some researchers have extended the smoothness condition of the graph flow to non-smoothness condition \cite{glorot2011deep,goodfellow2013maxout}, the graph with a derivative of 0 almost everywhere is still not well processed when considering QNNs with low-precision weights and activations. For the graph flow like Dirac Delta Function, Bengio \emph{et al}. \cite{bengio2013estimating} give an unbiased estimator of gradient, where some components of the model allow a binary decision with stochastic perturbations \cite{spall1992multivariate}. This estimator only requires broadcasting the loss function, instead of using back-propagation algorithm. Another estimator is Straight-Through Estimator (STE) proposed by Hinton \emph{et al}. \cite{hinton2012neural}, which back-propagates by replacing the sign function with identity function. However, why can't we revert to the original smoothness condition to consider the problem of quantization?

In this paper, according to the smoothness condition of the graph, we propose the Asymptotic-Quantized Estimator that propagates the gradients through continuously differentiable graph, no need to worry about the derivative of 0. In our AQE, the neuron output function connected with the graph is related to the full-precision value and the quantized value. Under the guidance of the asymptotic behaviour of AQE, the output function gradually approaches the quantized value. When the training is completed, the output function is completely converted into the quantized value, and the graph flow at this time is ``non-smooth'' non-linearities. By using AQE, we introduce the MINW-Net with low-precision weights and activations. The main contributions of this article are summarized as follows:
\begin{enumerate}
	\item We propose a novel Asymptotic-Quantized Estimator based on the smoothness condition of the graph that can be used to train quantized neural networks. This estimator maintains the smoothness and differentiability of the graph flow during the training process and restores the non-smooth connection of the graph at the end of training. In particular, we demonstrate that our AQE will degenerate to STE and unbiased estimator when $\alpha=\frac{1}{2}$.
	\item We propose a highly efficient MINW-Net with low-precision weights and activations. When choosing 1-3 bits according to the actual situation, we can use XNOR or SHIFT operations instead of convolution operations to accelerate the inference, and the model has a small parameter capacity.
\end{enumerate}

The rest of this paper is organized as follows: Section~II summarizes related prior works on estimators and QNNs. Our AQE and MINW-Net are presented in Section~III. In Section~IV, we demonstrate the validities of our AQE and MINW-Net via comparable experiments. The conclusion is given in Section~V.

\section{Related work}

\subsection{Estimating or Propagating Gradients}
In general, the output $h_i$ of a stochastic neuron $i$ depends on the noise source $z_i$ and the input $a_i$, where the input $a_i$ is typically the affine transformation $a_{i}=b_{i}+\sum_{j} W_{i j} x_{i j}$ (the internal parameters are typically the bias and weights of the neuron) and $x_i$ is the input of neuron $i$ in neural networks,
\begin{equation}
h_{i}=f\left(a_{i}, z_{i}\right)
\label{output}
\end{equation}

Only the above continuous equation has a non-zero gradient with respect to $a_i$, then the process of calculating gradients using the back-propagation algorithm can be performed. However, considering that the input and output of the neuron are assigned to quantized values in quantized neural networks, the above continuous equation is transformed into the discrete function. At this point, this equation has a derivative of $0$ with respect to $a_i$ almost everywhere, the back-propagation algorithm cannot deal with this equation. Therefore, many scholars have carried out researches on this issue, and part of the important work is to construct effective estimators~\cite{fiete2006gradient,spall1992multivariate,bengio2013estimating,hinton2012neural}.

$\left. 1 \right)$ \emph{Unbiased Estimator:} Bengio \emph{et al}. \cite{bengio2013estimating} present the unbiased estimator for stochastic binary neuron based on the framework of \textbf{Eq}~(\ref{output}), which can be applied to binary neural networks. In this work, they consider the output $h_i$:
\begin{equation}
h_{i}=f\left(a_{i}, z_{i}\right)=\mathbf{1}_{z_{i}>\operatorname{sigm}\left(a_{i}\right)}
\label{unbiased}
\end{equation}
Where $z_{i} \sim U[0,1]$ is the uniform distribution and $\operatorname{sigm}(a_{i})=\frac{1}{1+\exp (-a_{i})}$ is the sigmoid function.

The gradient can be described as $g_{i}=\frac{\partial \mathbb{E}_{z_{i}, c_{-i}}\left[L | c_{i}\right]}{\partial a_{i}}$ by defining the $L=L\left(h_{i}, c_{i}, c_{-i}\right)$, where the loss function $L$ depends on $h_i$, $c_i$ the noise source that influences $a_i$, and $c_{-i}$ the noise source that does not influence $a_i$.

Considering $h_{i} \in\{0,1\}$ based on \textbf{Eq}~(\ref{unbiased}), so
\begin{equation}
\begin{split}
g_i=&\mathbb{E}_{c_{i}, c_{-i}}\left[\operatorname{sigm}\left(a_{i}\right)\left(1-\operatorname{sigm}\left(a_{i}\right)\right)\left(L\left(1, c_{i}, c_{-i}\right)-\right.\right.\\
&\left.\left.L\left(0, c_{i}, c_{-i}\right)\right)\right]
\end{split}
\end{equation}

Since the gradient $g_i$ cannot be computed by its derivative, they have constructed another gradient $\hat{g}_{i}=\left(h_{i}-\operatorname{sigm}\left(a_{i}\right)\right) \times L$ that satisfies $\mathbb{E}[\hat{g}_{i}]=g_i$. Thus, an unbiased estimation of the gradient can be used to update all parameters in the entire neural networks without the need for a derivative with respect to $a_i$.

$\left. 2 \right)$ \emph{Straight-Through Estimator:} Another useful method of training low-precision neural networks is the ``Straight-Through Estimator (STE)" introduced in Hinton's lectures~\cite{hinton2012neural}. This method simply back-propagates through the sign function ($1$ if the input is positive, $-1$ otherwise) as if it has been the identity function. Considering the output of activation function, that is sign function here, from a previous layer, $a_i$ in the framework of \textbf{Eq}~(\ref{output}),
\begin{equation}
h_i=\operatorname{sign}(a_i).
\label{sign}
\end{equation}
Since an estimator $g_{h_i}$ of the gradient $\frac{\partial L}{\partial h_i}$ has been obtained by back-propagation algorithm where $L$ is the loss function, the STE of $\frac{\partial L}{\partial a_i}$ with respect to the activation value $a_i$ is simply
\begin{equation}
g_{a_i}=g_{h_i} \mathbf{1}_{|a_i| \leq 1}.
\label{ste}
\end{equation}
The binary neural networks can be trained by back-propagation algorithm with STE.

\subsection{Low Precision Quantized Neural Networks}
In inference, considering the acceleration of neural networks with low-precision weights and activations, many studies have been devoted to achieve low-precision neural networks such as Binary Neural Network (BNN), Ternary Neural Network (TNN) and Quantized Neural Network (QNN)~\cite{courbariaux2015binaryconnect,courbariaux2016binarized,rastegari2016xnor,li2016ternary,zhou2016dorefa,zhou2017incremental,lin2017towards}. 

BinaryConnect~\cite{courbariaux2015binaryconnect} is the first article that summarizes the complete quantization process, whose weights in the forward and backward of DNNs are 1-bit fixed-point rather than 32-bit floating-point. The process they proposed allows hardware calculation to simplify multiplication operations into accumulation operations. Simultaneously, a large amount of storage space is reduced, and the performance of classification accuracy has not decreased on MNIST, CIFAR-10 and SVHN. BNN~\cite{courbariaux2016binarized} further converts the activation values to 1-bit based on the BinaryConnect, and simplifies many multiply-accumulate operations into bitwise operations. Since both the weights and the activations are quantized into $\{-1,1\}$, the gradient of the discrete neurons needs to be solved by STE. XNOR-Net~\cite{rastegari2016xnor} proposes a method of fitting floating-point numbers, which adds a scale factor based on binary value instead of simply taking the sign function. By combining binary value weights with scale factor, XNOR-Net can almost achieve the same performance as the full-precision model on AlexNet. The above works are either directly or indirectly related to the 1-bit weights, and Ternary Weight Network (TWN)~\cite{li2016ternary} proposes to quantize the weights into 2-bit, taking only three numbers $\{-1,0,1\}$ instead of four numbers, which can perform better than 1-bit weights on MNIST and CIFAR-10.

There is also another series of works on quantized neural networks, DoReFa-Net~\cite{zhou2016dorefa} sets the weights, the activations and the gradients to low-precision, whose advantage is that it can accelerate the calculation not only in the inference, but also in the training because the gradients are quantized. So DoReFa-Net is very suitable to train directly on the hardware platform. INQ~\cite{zhou2017incremental} proposes an incremental network quantization method, which quantizes the full-precision network to low-precision network by weight grouping, group quantization and retraining. By restoring a part of the floating-point weights to restore the performance of the model, the precision loss is mitigated. Since the existing methods with binary weights and activations can cause performance degradation, ABC-Net~\cite{lin2017towards} fits floating-point weights through a linear combination of multiple binary weights, which will reduce information loss. 

Above all, these quantized neural networks, once involved in quantizing the activations, are likely to use STE to estimate the gradient of the loss with respect to the activations. In other words, the performance of quantized neural networks based on STE is limited by the performance of the STE.

\section{MINW-Net}
In this section, we detail our MINW-Net, a QNN with low-precision weights and activations trained by AQE, because we note that STE may not be the best choice for quantized neural networks. First, we elaborate how to exploit Asymptotic-Quantized Estimator and then quantize the weights and the activations to low-precision through AQE.

\subsection{Quantized Estimator for Binary Neurons}
The Straight-Through Estimator above can deal with the Dirac Delta Function $\delta(\cdot)$ with a derivative of $0$ almost everywhere, but it can only propagate the gradient by the identity function. Now, let us consider this case, where the probability is a continuous function through stochastic neurons. The stochastic binary neurons of our model correspond to several binary decisions, and sign function is no longer used directly in the forward pass of neural networks. In the back-propagation algorithm, we assume that another continuous function is used to propagate the gradient instead of the identity function. In order to discuss this issue without loss of generality, in the framework of \textbf{Eq}~(\ref{output}), we propose our AQE where the output $\hat{h}_i$ of a neuron $i$ satisfies
\begin{equation} 
\hat{h}_i(\alpha, h_i(a_i), a_i) = \alpha h_i(a_i) + (1-\alpha) a_i
\label{ours}
\end{equation}
where $\alpha$ is an adjustable variable in the range of $(0,1)$ and $h_i(a_i)=\operatorname{sign}(a_i)$ as \textbf{Eq}~(\ref{sign}). According to \textbf{Eq}~(\ref{ours}), the output $\hat{h}_i(1, h_i(a_i), a_i)=\operatorname{sign}(a_i)$ degenerates into binary neuron if $\alpha=1$ and the output $\hat{h}_i(0, h_i(a_i), a_i)=a_i$ degenerates into full-precision neuron if $\alpha=0$.

The following discussion needs to be included in the framework of probability, so we must assume that there is an independent noise source $z_i$ that drives the stochastic samples first. In addition, considering that the value of $\operatorname{sign}(a_i - z_i)$ is $-1$ or $1$, there is no reason for the range of $z_i$ to exceed the output of function. So we follow the work~\cite{bengio2013estimating}, and we assume that $z_{i} \sim U[-1,1]$ is the uniform distribution.

With the above assumptions, we rewrite \textbf{Eq}~(\ref{ours}) as
\begin{equation} 
\hat{h}_i(\alpha, h_i, a_i, z_i) = \alpha \operatorname{sign}(a_i - z_i) + (1-\alpha) a_i
\label{our}
\end{equation}
During the back-propagation algorithm, we use following equation to calculate the gradient of weights ($ W^l_{ij}$ is the connecting between neuron $j$ at layer $l-1$ and neuron $i$ at layer $l$),
\begin{equation}
\frac{\partial L}{\partial W^l_{ij}}=\frac{\partial L}{\partial a^l_i} \hat{h}^{l-1}_j.
\end{equation}
Considering the expectation of the above equation, this equation can be re-expressed as 
\begin{equation}
\begin{split}
\frac{\partial}{\partial W^l_{ij}} \mathbb{E}_{z_i,c_{-i}}[L|c_i]=&\mathbb{E}_{z_i,c_{-i}}\left[\frac{\partial L}{\partial a^l_i} \hat{h}^{l-1}_j\bigg|c_i\right] \\
=&\mathbb{E}_{c_{-i}}\left[\mathbb{E}_{z_i}\left[\frac{\partial L}{\partial a^l_i}\right]\hat{h}^{l-1}_j\bigg|c_i\right]
\end{split}
\end{equation}
where $c_i$ is the noise source that influences $a_i$, $c_{-i}$ is the noise source that does not influence $a_i$, $\mathbb{E}_{c_{-i}}[\cdot]$ means the expectation over $c_{-i}$, and $\mathbb{E}[\cdot|c_i]$ means the expectation over everything besides $c_i$.

\begin{theorem}
\label{thm1}
Let us define $L=L(\hat{h}_i,c_i,c_{-i})$ where $\hat{h}_i$ follows \textbf{Eq}~(\ref{our}), then our AQE is equivalent to the STE when $\alpha=\frac{1}{2}$ (for brevity, we drop the indices of $l$).
\end{theorem}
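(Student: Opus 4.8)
The plan is to compute, under the AQE rule \textbf{Eq}~(\ref{our}) specialised to $\alpha=\tfrac12$, the gradient $\partial L/\partial a_i$ that back-propagation actually passes below the neuron, and to verify that it coincides with the STE expression \textbf{Eq}~(\ref{ste}), $g_{a_i}=g_{h_i}\mathbf{1}_{|a_i|\leq 1}$. First I would apply the chain rule along the (now continuous) neuron, $\frac{\partial L}{\partial a_i}=\frac{\partial L}{\partial \hat h_i}\frac{\partial \hat h_i}{\partial a_i}$, and identify $\partial L/\partial \hat h_i$ with the gradient estimate $g_{h_i}$ that deeper layers have already produced for $\hat h_i$; this is the exact analogue of $g_{h_i}$ in \textbf{Eq}~(\ref{ste}). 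Everything then reduces to evaluating $\partial \hat h_i/\partial a_i$ and, following the averaging already set up in the display just before the theorem, its expectation over the driving noise $z_i\sim U[-1,1]$.

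Next I would carry out that computation. Differentiating \textbf{Eq}~(\ref{our}), the linear term $(1-\alpha)a_i$ contributes the constant $1-\alpha$, while $\alpha\operatorname{sign}(a_i-z_i)$ contributes $\alpha$ times the distributional derivative of the sign, i.e. the Dirac mass $2\alpha\,\delta(a_i-z_i)$ (the $2$ being the jump height of $\operatorname{sign}$). This is precisely the ``derivative $0$ almost everywhere'' pathology, and it is cured by the expectation: integrating the Dirac mass against the uniform density $\tfrac12$ on $[-1,1]$ yields $\alpha\,\mathbf{1}_{|a_i|\leq 1}$, so that $\mathbb{E}_{z_i}\!\big[\partial \hat h_i/\partial a_i\big]=\alpha\,\mathbf{1}_{|a_i|\leq 1}+(1-\alpha)$. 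Equivalently one can interchange differentiation and expectation using the identity $\mathbb{E}_{z_i}[\operatorname{sign}(a_i-z_i)]=\operatorname{clip}(a_i,-1,1)$, whose derivative is $\mathbf{1}_{|a_i|\leq 1}$. Setting $\alpha=\tfrac12$, and using that the argument of the sign is the normalised/clipped pre-activation, which lies in $[-1,1]$ exactly on the set where the indicator is active, this multiplier equals $\mathbf{1}_{|a_i|\leq 1}$; hence $\frac{\partial L}{\partial a_i}=g_{h_i}\mathbf{1}_{|a_i|\leq 1}$, which is the STE update \textbf{Eq}~(\ref{ste}).

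The main obstacle I anticipate is making the middle step rigorous rather than merely formal: justifying the distributional derivative of $\operatorname{sign}(a_i-z_i)$ and the interchange of $\partial/\partial a_i$ with $\mathbb{E}_{z_i}$, and — more delicately — reconciling the constant term $1-\alpha=\tfrac12$ that survives for $|a_i|>1$ with the hard-threshold behaviour of \textbf{Eq}~(\ref{ste}), which suppresses the gradient there; this is where the assumption that the inputs to $\operatorname{sign}$ are confined to $[-1,1]$ has to do real work. A useful cross-check, and essentially the same computation seen from the forward side, is the decomposition $\hat h_i-\mathbb{E}_{z_i}[\hat h_i]=h_i-\mathbb{E}_{z_i}[h_i]$ valid at $\alpha=\tfrac12$ (writing $\operatorname{sign}(a_i-z_i)=2h_i-1$ for the binary sample $h_i=\mathbf{1}_{a_i>z_i}$, so that $\hat h_i=h_i+\tfrac12(a_i-1)$): this is exactly the centred form behind the unbiased estimator $\hat g_i=(h_i-\operatorname{sigm}(a_i))L$, and it shows that $\alpha=\tfrac12$ is the value at which the stochastic and corrective parts of $\hat h_i$ are balanced so that both reductions go through simultaneously.
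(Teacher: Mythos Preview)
Your route is genuinely different from the paper's. The paper does not differentiate $\hat h_i$ with respect to $a_i$ directly; instead it writes $\mathbb{E}_{z_i}[L]$ as the probability-weighted sum $L(\hat h_i(\alpha,1,a_i))\,P(a_i>z_i)+L(\hat h_i(\alpha,-1,a_i))\,(1-P)$, differentiates that in $a_i$, and then Taylor-expands each $L(\hat h_i(\alpha,\pm1,a_i))$ about $h_i=0$ to collapse the difference into $2\,\partial L/\partial h_i\big|_{h_i=0}$. Combined with $\partial P/\partial a_i=\mathbf 1_{|a_i|\le 1}$ (their \textbf{Eq}~(\ref{partical})) and $\partial\hat h_i/\partial h_i=\alpha$, this yields the clean formula $2\alpha\,(\partial L/\partial h_i)\,\mathbf 1_{|a_i|\le 1}$; at $\alpha=\tfrac12$ this is exactly the STE, with no leftover constant.

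The discrepancy with your computation is not accidental: in passing from the second to the third line of \textbf{Eq}~(\ref{key}) the product rule is applied only to the factor $P$, while the factors $L(\hat h_i(\alpha,\pm1,a_i))$ are treated as if they did not depend on $a_i$. But they do, through the $(1-\alpha)a_i$ piece of $\hat h_i$, and that dropped contribution is precisely your surviving $(1-\alpha)$ term. So the obstacle you flag is real; the paper's argument sidesteps it by an incomplete differentiation, whereas your chain-rule/distributional-derivative computation keeps the honest bookkeeping. What your approach buys is transparency (no Taylor expansion, no ``small impact of a single neuron'' assumption); what the paper's approach buys is a formula that matches the theorem verbatim, but only because the $(1-\alpha)$ contribution has been discarded. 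Under the HardTanh clipping that the algorithm actually enforces on $a_i$ the two routes agree on $|a_i|\le 1$, which is exactly the resolution you already anticipate.
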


\begin{proof}
\begin{equation}
\begin{split}
&\mathbb{E}_{z_i}\left[\frac{\partial}{\partial a_i} L\right]=\frac{\partial}{\partial a_i} \mathbb{E}_{z_i}[L] \\ 
&=\frac{\partial}{\partial a_i}[L(\hat{h}_i(\alpha,1,a_i)) P(a_i>z_i | a_i)+\\
&L(\hat{h}_i(\alpha,-1,a_i))(1-P(a_i>z_i | a_i))] \\ 
&=\frac{\partial P(a_i>z_i | a_i)}{\partial a_i}[L(\hat{h}_i(\alpha,1,a_i))-L(\hat{h}_i(\alpha,-1,a_i))]
\label{key}
\end{split}
\end{equation}

For $L(\hat{h}_i(\alpha,h_i,a_i))$, we can approximate it using the Taylor expansion. The output of a single neuron ($h_i=\pm1$) generally has only a small impact on the loss function~\cite{bengio2013estimating}, thus, $\frac{\partial L}{\partial h_i}\bigg|_{h_i=0} \gg \frac{\partial^{2} L}{\partial h_i^{2}}\bigg|_{h_i=0} \gg \frac{\partial^{3} L}{\partial h_i^{3}}\bigg|_{h_i=0}$. In other words, the $O(\cdot)$ is usually negligible when considering its impact on the last equation.
\begin{equation}
\begin{split}
&L(\hat{h}_i(\alpha,1,a_i))=L(\hat{h}_i(\alpha,0,a_i))+\frac{\partial L}{\partial h_i}\bigg|_{h_i=0}+\\
&\frac{\partial^{2} L}{\partial h_i^{2}}\bigg|_{h_i=0}+O\left(\frac{\partial^{3} L}{\partial h_i^{3}}\bigg|_{h_i=0}\right)\\
&L(\hat{h}_i(\alpha,-1,a_i))=L(\hat{h}_i(\alpha,0,a_i))-\frac{\partial L}{\partial h_i}\bigg|_{h_i=0}+\\
&\frac{\partial^{2} L}{\partial h_i^{2}}\bigg|_{h_i=0}+O\left(\frac{\partial^{3} L}{\partial h_i^{3}}\bigg|_{h_i=0}\right)
\label{L}
\end{split}
\end{equation}

For $\frac{\partial P(a_i>z_i | a_i)}{\partial a_i}$, we split it into two parts $|a_i| \leq 1$ and $|a_i| > 1$.
\begin{equation}
\begin{split}
&\frac{\partial P(a_i>z_i | a_i)}{\partial a_i}=\frac{\partial P(a_i>z_i | a_i)}{\partial a_i}+\frac{\partial P(a_i>z_i | a_i)}{\partial a_i}\\
&=\frac{\partial \int_{-1}^{1}\frac{1}{2}\,dz_i}{\partial a_i} \bigg|_{|a_i| > 1}+\frac{\partial \int_{-a_i}^{a_i}\frac{1}{2}\,dz_i}{\partial a_i} \bigg|_{|a_i| \leq 1}=\mathbf{1}_{|a_i| \leq 1}
\label{partical}
\end{split}
\end{equation}

Combining \textbf{Eq}~(\ref{L}) and \textbf{Eq}~(\ref{partical}), the \textbf{Eq}~(\ref{key}) can be derived as
\begin{equation}
\begin{split}
&\mathbb{E}_{z_i}\left[\frac{\partial L}{\partial a_i}\right]=\mathbf{1}_{|a_i| \leq 1}\left(2\frac{\partial L(\hat{h}_i)}{\partial h_i}\bigg|_{h_i=0}\right)\\
&=\left(2\frac{\partial L}{\partial h_i}\frac{\partial \hat{h}_i}{\partial h_i}\bigg|_{h_i=0}\right)\mathbf{1}_{|a_i| \leq 1}=2\alpha\frac{\partial L}{\partial h_i}\mathbf{1}_{|a_i| \leq 1}
\label{ste2}
\end{split}
\end{equation}
Let $\alpha=\frac{1}{2}$, then
\begin{equation}
\mathbb{E}_{z_i}\left[\frac{\partial L}{\partial a_i}\right]=\frac{\partial L}{\partial h_i}\mathbf{1}_{|a_i| \leq 1}
\label{STE}
\end{equation}
The \textbf{Eq}~(\ref{STE}) is equivalent to \textbf{Eq}~(\ref{ste}) that is the STE.
\end{proof}

\begin{theorem}
	\label{thm2}
	On the basis of \textbf{Theorem}~\ref{thm1}, our AQE is also equivalent to the unbiased estimator when $\alpha=\frac{1}{2}$.
\end{theorem}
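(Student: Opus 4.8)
The plan is to re-read the derivation already carried out for \textbf{Theorem}~\ref{thm1}, but to compare the resulting gradient against Bengio \emph{et al}.'s unbiased estimator rather than against the STE. Concretely, I would transport Bengio's construction for the stochastic binary neuron --- originally stated with noise $z_i\sim U[0,1]$, squashing by $\operatorname{sigm}$, and outputs in $\{0,1\}$ --- into the AQE setting of \textbf{Eq}~(\ref{our}), where $z_i\sim U[-1,1]$ and the binary outputs lie in $\{-1,1\}$, and then check that at $\alpha=\tfrac12$ the gradient produced by AQE coincides, under the same Taylor truncation used before, with the quantity $g_i$ that Bengio's estimator targets without bias.

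First I would identify the firing probability of the stochastic part of \textbf{Eq}~(\ref{our}), namely $p(a_i):=P(\operatorname{sign}(a_i-z_i)=1\mid a_i)=P(z_i<a_i\mid a_i)$; with $z_i\sim U[-1,1]$ this is the clipped (``hard'') sigmoid, and its derivative is the box function $\mathbf 1_{|a_i|\le 1}$ already computed in \textbf{Eq}~(\ref{partical}) --- so $p$ here plays the role of $\operatorname{sigm}(a_i)$ and $p'(a_i)$ the role of $\operatorname{sigm}(a_i)(1-\operatorname{sigm}(a_i))$ in \textbf{Eq}~(\ref{unbiased}). Then I would average \textbf{Eq}~(\ref{key}) over $c_i,c_{-i}$ and set $\alpha=\tfrac12$; invoking the Taylor truncation of \textbf{Eq}~(\ref{L}) (the same negligibility of higher derivatives as in \textbf{Theorem}~\ref{thm1}) the bracket collapses to a first-order term and one obtains
\begin{equation*}
\frac{\partial}{\partial a_i}\,\mathbb{E}_{z_i,c_{-i}}[L\mid c_i]
=\mathbb{E}_{c_i,c_{-i}}\!\left[\,p'(a_i)\big(L(\hat h_i(\tfrac12,1,a_i))-L(\hat h_i(\tfrac12,-1,a_i))\big)\right],
\end{equation*}
which is structurally identical to $g_i=\mathbb{E}_{c_i,c_{-i}}[\operatorname{sigm}(a_i)(1-\operatorname{sigm}(a_i))(L(1,c_i,c_{-i})-L(0,c_i,c_{-i}))]$ once the correspondence of the previous step is applied. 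To finish, I would note that this gradient is exactly the expectation of the Bengio-type surrogate $\hat g_i=L\,\partial_{a_i}\!\log P(\operatorname{sign}(a_i-z_i)\mid a_i)$ for the present neuron, which reduces to the form $(h_i-\operatorname{sigm}(a_i))\,L$ of Bengio's estimator once the squashing function and the $\{0,1\}/\{-1,1\}$ relabeling are matched --- so AQE at $\alpha=\tfrac12$ realizes the unbiased estimator, not merely the STE.

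The main obstacle is bookkeeping rather than conceptual depth: the two frameworks differ in the noise support ($U[0,1]$ vs.\ $U[-1,1]$), the squashing nonlinearity ($\operatorname{sigm}$ vs.\ its hard version) and the output label set ($\{0,1\}$ vs.\ $\{-1,1\}$), so the constants must be tracked carefully --- in particular the factor $2\alpha$ that appears in \textbf{Eq}~(\ref{ste2}) must collapse to exactly $1$ at $\alpha=\tfrac12$ for the identification with $g_i$ to be an equality rather than a mere proportionality. A secondary point is consistency: the Taylor truncation has to be invoked in the same regime as in \textbf{Theorem}~\ref{thm1}, since otherwise the two stated equivalences would rest on different approximations.
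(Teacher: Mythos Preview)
Your approach is different from the paper's, and considerably more elaborate. The paper does not transport Bengio's REINFORCE-style construction into the AQE setting at all. Instead, its proof is a two-line chain-rule identity: using the affine relation $\hat h_i=\alpha h_i+(1-\alpha)a_i$, it computes the \emph{deterministic} derivative
\[
\frac{\partial L(\hat h_i)}{\partial a_i}
=\frac{\partial L}{\partial h_i}\,\frac{\partial h_i}{\partial \hat h_i}\,\frac{\partial \hat h_i}{\partial a_i}
=\frac{\partial L}{\partial h_i}\cdot\frac{1}{\alpha}\cdot(1-\alpha),
\]
which at $\alpha=\tfrac12$ equals $\partial L/\partial h_i$. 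Since \textbf{Eq}~(\ref{ste2}) from Theorem~\ref{thm1} already gives $\mathbb{E}_{z_i}[\partial L/\partial a_i]=2\alpha\,(\partial L/\partial h_i)\mathbf 1_{|a_i|\le1}$, at $\alpha=\tfrac12$ the expected gradient coincides with this deterministic derivative, and the paper declares $\mathbb{E}[\partial L/\partial a_i]=\partial L/\partial a_i$ --- ``an unbiased estimator.'' No relabeling $\{0,1\}\leftrightarrow\{-1,1\}$, no identification of the hard sigmoid with $\operatorname{sigm}$, and no appeal to the surrogate $\hat g_i=(h_i-\operatorname{sigm}(a_i))L$ appears.

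What each route buys: the paper's argument is almost free once Theorem~\ref{thm1} is in hand, but it interprets ``the unbiased estimator'' loosely --- it shows AQE is \emph{an} unbiased estimator of a deterministically computed gradient, rather than showing equivalence to Bengio's specific construction from Section~II-A.1. Your approach actually targets the latter: you show AQE's gradient coincides with the quantity $g_i$ that Bengio's $\hat g_i$ is unbiased for, which is a tighter reading of the theorem statement. The price is the bookkeeping you already flagged (noise support, squashing function, output labels), and the notational slip in your displayed equation where the left side is conditioned on $c_i$ but the right side averages over it --- you would need to clean that up. Both routes lean on the same Taylor truncation and the same collapse of $2\alpha$ to $1$ at $\alpha=\tfrac12$.
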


\begin{proof}
\begin{equation}
\begin{split}
\frac{\partial L(\hat{h}_i)}{\partial a_i}&=\frac{\partial L}{\partial h_i}\frac{\partial h_i}{\partial \hat{h}_i(\alpha,h_i,a_i)}\frac{\partial \hat{h}_i(\alpha,h_i,a_i)}{\partial a_i} \\
&=\frac{\partial L}{\partial h_i} \frac{1}{\alpha} (1-\alpha)\underset{\alpha=\frac{1}{2}}{=}\frac{\partial L}{\partial h_i} 
\end{split}
\end{equation}
We have $\mathbb{E}\left[\frac{\partial L}{\partial a_i}\right]=\frac{\partial L}{\partial a_i}$ that is an unbiased estimator, because $\mathbb{E}\left[\frac{\partial L}{\partial a_i}\right]\underset{\alpha=\frac{1}{2}}{=}\frac{\partial L}{\partial h_i}$ based on \textbf{Eq}~(\ref{ste2}).
\end{proof}

\subsection{The Asymptotic Behaviour of the AQE}
We have demonstrated that our AQE is equivalent to STE and unbiased estimator when $\alpha=\frac{1}{2}$, whereas the output $\hat{h}_i(\alpha, h_i, a_i) = \alpha \operatorname{sign}(a_i) + (1-\alpha) a_i$ of stochastic neuron $i$ is still the continuous function, not the discrete function we need. We will use \textbf{Theorem}~\ref{thm3} to explain that the continuous function can approach to the discrete function, when the training process is completed.

\begin{theorem}
\label{thm3}
In the framework of the series, $\hat{h}_i(\alpha, h_i, a_i, z_i)$ will approach to $h_i = \operatorname{sign}(a_i-z_i)$ when the number of iterations is sufficient. An important assumption is that noise source $z_i$ during all iterations obeys the same uniform distribution.
\end{theorem}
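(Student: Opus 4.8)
The plan is to exhibit $\hat{h}_i$, as it evolves across the training iterations, as a geometric series whose full-precision component is damped by the factor $(1-\alpha)$ at every step, and then to sum that series explicitly. First I would model one iteration by the update in which the continuous input that feeds the estimator at step $t$ is the estimator's own output from step $t-1$, so that
\begin{equation}
\hat{h}_i^{(t)} = \alpha\operatorname{sign}(a_i-z_i) + (1-\alpha)\,\hat{h}_i^{(t-1)},\qquad \hat{h}_i^{(0)}=a_i .
\end{equation}
Unrolling this recursion and invoking the finite geometric sum $\sum_{k=0}^{t-1}(1-\alpha)^k=\frac{1-(1-\alpha)^t}{\alpha}$ (legitimate because $\alpha\in(0,1)$), the intermediate terms collapse and we obtain the closed form
\begin{equation}
\hat{h}_i^{(t)} = \bigl(1-(1-\alpha)^t\bigr)\operatorname{sign}(a_i-z_i) + (1-\alpha)^t a_i .
\end{equation}

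Next I would pass to the limit $t\to\infty$. Since $\alpha\in(0,1)$ forces $|1-\alpha|<1$, the factor $(1-\alpha)^t$ tends to $0$; hence the bounded remainder $(1-\alpha)^t a_i$ vanishes and the coefficient of $\operatorname{sign}(a_i-z_i)$ tends to $1$, so $\hat{h}_i^{(t)}\to\operatorname{sign}(a_i-z_i)=h_i$. This is exactly the asserted asymptotic, and it makes precise the claim in Section~III that the continuous, differentiable graph used in back-propagation reverts to the non-smooth binary connection once training has finished. I would also record the explicit error bound $|\hat{h}_i^{(t)}-h_i|\le (1-\alpha)^t(|a_i|+1)$, which quantifies the phrase ``when the number of iterations is sufficient'' and shows the convergence is geometric.

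The delicate point --- and the reason the hypothesis insists that the noise source $z_i$ obeys the \emph{same} uniform distribution throughout all iterations --- is the handling of the stochasticity. If an independent fresh $z_i^{(k)}\sim U[-1,1]$ were resampled at every step, the sign terms would no longer telescope to a single $\operatorname{sign}(a_i-z_i)$ but would accumulate into the $(1-\alpha)$-weighted average $\alpha\sum_{k=0}^{t-1}(1-\alpha)^{t-1-k}\operatorname{sign}(a_i-z_i^{(k)})$, whose limit is $\mathbb{E}_{z_i}[\operatorname{sign}(a_i-z_i)]=2P(z_i<a_i)-1$; for $|a_i|\le1$ this equals $a_i$ rather than $\operatorname{sign}(a_i)$, so the argument would fail. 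The proof must therefore either hold the realization of $z_i$ fixed along the iteration index (the natural reading of the assumption), or argue that once $a_i$ has settled outside $[-1,1]$ the term $\operatorname{sign}(a_i-z_i)$ is deterministic for every $z_i\in[-1,1]$ so that the same geometric collapse goes through. I expect making this dichotomy watertight to be the main obstacle, the rest being the routine summation above, and it should be stated so that the limit is consistently $h_i=\operatorname{sign}(a_i-z_i)$, reconciling with Theorems~\ref{thm1} and~\ref{thm2} where the estimator is used only \emph{during} training.
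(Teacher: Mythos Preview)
Your proposal is correct and follows essentially the same approach as the paper: both model the iteration as the affine recursion $x_{t+1}=\alpha\,h_i+(1-\alpha)x_t$ with $h_i=\operatorname{sign}(a_i-z_i)$ held fixed, sum the resulting geometric series, and let $(1-\alpha)^t\to 0$. The only cosmetic differences are that the paper splits into the two cases $a_i>z_i$ and $a_i\le z_i$ and treats them separately, whereas you keep $\operatorname{sign}(a_i-z_i)$ symbolic and thereby handle both at once; and you add the explicit error bound and the discussion of what goes wrong if $z_i$ is resampled, neither of which appears in the paper.
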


\begin{proof}
We consider a back-propagation as an iterative process. Thus,
we define $a_i(n)$ as the $n$-th iteration of $a_i$, then $\hat{h}_i(\alpha, h_i, a_i(n))$ can be considered the next iteration of $a_i(n)$ which is equivalent to $a_i(n+1)$. We divide the input of stochastic neurons $a_i$ into two parts, which are $a_i^{> z_i}$ and $a_i^{\leq z_i}$. First consider the part of the neurons $a_i > z_i$ ($h_i=1$ at this time). Thus, the general terms of series $a_i^{> z_i}$ from $1$ to $n$ are written as follows based on \textbf{Eq}~(\ref{ours}),
\begin{equation} 
\begin{array}{lcc} 
& a_i^{> z_i}(2) - (1-\alpha) a_i^{> z_i}(1) = \alpha & (1) \\
& \vdots & \vdots \\
& a_i^{> z_i}(n) - (1-\alpha) a_i^{> z_i}(n-1) = \alpha & (n-1) \\
& a_i^{> z_i}(n+1) - (1-\alpha) a_i^{> z_i}(n) = \alpha & (n) \\
\end{array}
\label{series}
\end{equation}
Let $(n) + (1-\alpha) \times (n-1) + \cdots + (1-\alpha)^{n-1} \times (1)$, then we get the equation as follows, 
\begin{equation}
\begin{split} 
&a_i^{> z_i}(n+1) - (1-\alpha)^{n-1} a_i^{> z_i}(1) = \alpha [1 + (1-\alpha)+ \\
&(1-\alpha)^2 + \cdots + (1-\alpha)^{n-1}] \\
& = \alpha \frac{1-(1-\alpha)^n}{1-(1-\alpha)}=1-(1-\alpha)^n
\end{split}
\end{equation}

As the number of iterations increases, $a_i^{> z_i}(n+1)$ will asymptotically approach to $1$ (Similarly, $a_i^{\leq z_i}(n+1)$ will asymptotically approach to $-1$). Since the number of iterations is enough ($n \to \infty$) and $1-\alpha$ is in the range of $(0,1)$, the input of stochastic neurons $a_i(n+1)$ ($a_i^{> z_i}(n+1)$ and $a_i^{\leq z_i}(n+1)$) can be rewritten as $a_i(n+1) = h_i$ (in other words, $\hat{h}_i(\alpha, h_i, a_i(n))=h_i=\operatorname{sign}(a_i - z_i)$). With the guarantee of \textbf{Theorem}~\ref{thm3}, we can use our AQE to train the binary neural networks.
\end{proof}

\subsection{Low-precision Quantization of Weights and Activations}
From \textbf{Theorem}~\ref{thm3}, we know the fact that both weights and activations are continuous values with full-precision in the training process; both weights and activations are discrete values with low-precision in the inference process. In particular, the weights and activations involved in the calculation of loss function are full-precision, and those participated in the calculation of accuracy are low-precision. We don't care if the weights and activations in the training process are quantized, as long as the trained model is quantized in the inference. From the two parts of training and inference, we detail how to execute our MINW-Net with low-precision weights and activations using AQE. 

In QNNs, the activations are updated by our AQE in the training:
\begin{equation}
\begin{split}
&\textbf { Forward: } \hat{h}_i^l = \alpha h_i(a_i^l) + (1-\alpha) a_i^l \\ 
&\textbf { Backward: } \frac{\partial L}{\partial a_i^l}=2\alpha\frac{\partial L}{\partial h_i^l}\mathbf{1}_{|a_i| \leq 1},
\end{split}
\label{activation}
\end{equation}
the weights are updated by our AQE in the training:
\begin{equation}
\begin{split}
&\textbf { Forward: } \hat{W}_{ij}^l = \alpha h_i(W_{ij}^l) + (1-\alpha) W_{ij}^l \\ 
&\textbf { Backward: } \frac{\partial L}{\partial W_{ij}^l}=2\alpha\frac{\partial L}{\partial h_i^l}\mathbf{1}_{|W_{ij}^l| \leq 1}.
\end{split}
\label{weight}
\end{equation}

In inference, the weights and activations are quantized as
\begin{equation}
\begin{split}
&\hat{h}_i^l = h_i(a_i^l) \\ 
&\hat{W}_{ij}^l=h_i(W_{ij}^l).
\end{split}
\label{inference}
\end{equation}

For example in BNN, the output $h_i$ is sign function as \textbf{Eq}~(\ref{sign}), which returns a value from $\{-1,1\}$ that is consistent with \cite{courbariaux2015binaryconnect,courbariaux2016binarized,rastegari2016xnor}. In TNN, the output $h_i$ can be written as below:
\begin{equation}
h_i=h_i\left(a_i | \Delta\right)=\left\{
\begin{array}{ll}{0} & \text { if }\left|a_i\right| \leq \Delta  \\ 
{\operatorname{sign}(a_i)} & {\text { otherwise }}
\end{array}\right.
\label{tnn}
\end{equation}
where $\Delta$ is a positive threshold parameter, which returns a value from $\{-1,0,1\}$ based on \cite{li2016ternary}. 

The inference of the above two networks can be computed by bitwise operations without multiply-accumulate operations. When considering the bit width of weights and activations exceeds 2-bit, we no longer use the round-off integers scheme like \cite{zhou2016dorefa} because its inference will require multiply operations. In our QNN scheme, the function output $h_i$ is defined as below:
\begin{equation}
h_i=h_i\left(a_i | \Delta_j\right)=\left\{
\begin{array}{ll}{0} & \text { if }\left|a_i\right| \leq \Delta_q  \\ 
{\operatorname{sign}(a_i)} 2^{-p} & \text { if }\left|a_i\right| \leq \Delta_p
\end{array}\right.
\label{qnn}
\end{equation}
where $p$ is taken from $q-1$ to $0$ in turn, $\Delta_j$ represents $q$ positive threshold parameters, and $q=2^{n-1}-1$ for $n$-bit QNN. In this scheme, we can use shift operations instead of multiply operations to infer the entire network, because the weights and activations are powers of two.

\subsection{The Training and Inference Algorithm for MINW-Net}

We present the training and inference algorithm for our MINW-Net as \textbf{Algorithm}~\ref{al}, which is equally applicable to both convolutional layers and fully-connected layers, ignoring the details like Batch Normalization and Pooling layers (of course, they will be used in practice). According to $h_i$ in \textbf{Algorithm}~\ref{al} which can be chosen from \textbf{Eq}~(\ref{sign}), \textbf{Eq}~(\ref{tnn}) and \textbf{Eq}~(\ref{qnn}), we can execute BNN, TNN and QNN respectively. As a result, the forward function in this algorithm can obtain the acceleration by bitwise or shift operations.

\begin{algorithm} 
	\caption{Training our MINW-Net with M-bit Inputs (Activations) and N-bit Weights using AQE. The Activations and Weights are quantized based on \textbf{Eq}~(\ref{activation}) and \textbf{Eq}~(\ref{weight}) respectively.} 
	\SetKwInOut{KwIn}{\textbf{Require}}
	\SetKwInOut{KwOut}{\textbf{Ensure}}
	\KwIn{a minibatch of inputs and targerts $(a_0,a^*)$, learning rate $\eta$, and previous weights $W_l$} 
	\KwOut{the updated weights $W^{t+1}$} 

    \{1. Computing the gradients:\}
    
    \{1.1 Forward propagation:\}
	
	\For{$l=1$ to $n$} 
	{ 
		
		\uIf{`Training'}{
			$\hat{W}_{l}^{q} \leftarrow \alpha h_i(W_l)+(1-\alpha) W_l$\\
			$\tilde{a}_{l} \leftarrow$ $a_{l-1}^{q} \hat{W}_{l}^{q}$\\
			\uIf{$l<n$}{$a_{l}^q \leftarrow \alpha h_i\left(\tilde{a}_{l}\right)+(1-\alpha)\tilde{a}_{l}$}
			\textbf{end}}
		
		\uElseIf{`Inference'}{
		$\hat{W}_{l}^{q} \leftarrow h_i(W_l)$\\
		$\tilde{a}_{l} \leftarrow$ $a_{l-1}^{q} \hat{W}_{l}^{q}$\\
		\uIf{$l<n$}{$a_{l}^q \leftarrow h_i\left(\tilde{a}_{l}\right)$}	
		\textbf{end} 
		}	
		\textbf{end} 
	} 
	\{1.2 Backward propagation:\}
	
	Computing $g_{a_{n}}=\frac{\partial L}{\partial a_{n}}$ based on $a_n$ and $a^*$.
	
	\For{$l=n$ to $1$} 
	{ 
		\If{$l < n$}{
			$\tilde{a}_{l}  \leftarrow \max \left(-1, \min \left(1, \tilde{a}_{l} \right)\right)$
			
			$g_{\tilde{a}_{l}} \leftarrow g_{a_{l}^q}$
			}
		
		$g_{a^q_{l-1}} \leftarrow g_{\tilde{a}_{l}} \hat{W}_{l}^{q}$
		
		$g_{\hat{W}_{l}^{q}} \leftarrow g^T_{\tilde{a}_{l}} a_{l-1}^{q}$
	} 
	\{2. Updating the gradients:\}
	
		\For{$l=1$ to $n$} 
	{ 
		
		$W_{l}^{t+1} \leftarrow$update$\left(W_{l}, g_{\hat{W}_{l}^{q}}, \eta\right)$
		
		$W_{l}^{t+1}  \leftarrow \max \left(-1, \min \left(1, W_{l}^{t+1} \right)\right)$
	} 
\label{al}
\end{algorithm}

\subsection{Description of the Special Layers}
\begin{figure*}[hbtp]
	\centering
	\subfigure[]{\includegraphics[width=0.3\textwidth]{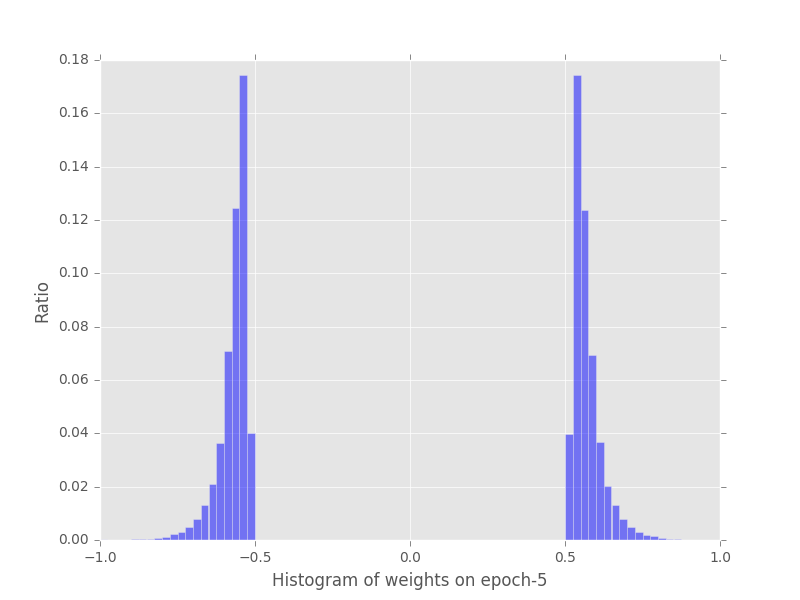}}
	\subfigure[]{\includegraphics[width=0.3\textwidth]{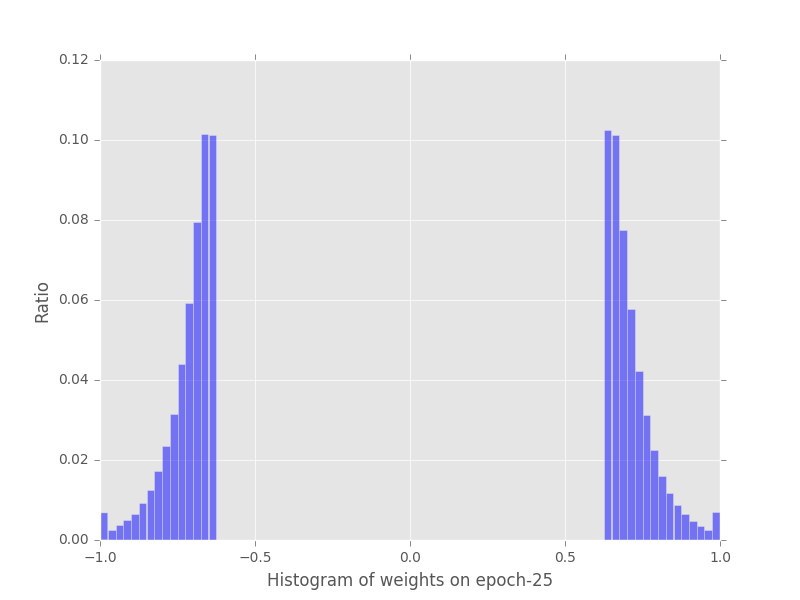}}
	\subfigure[]{\includegraphics[width=0.3\textwidth]{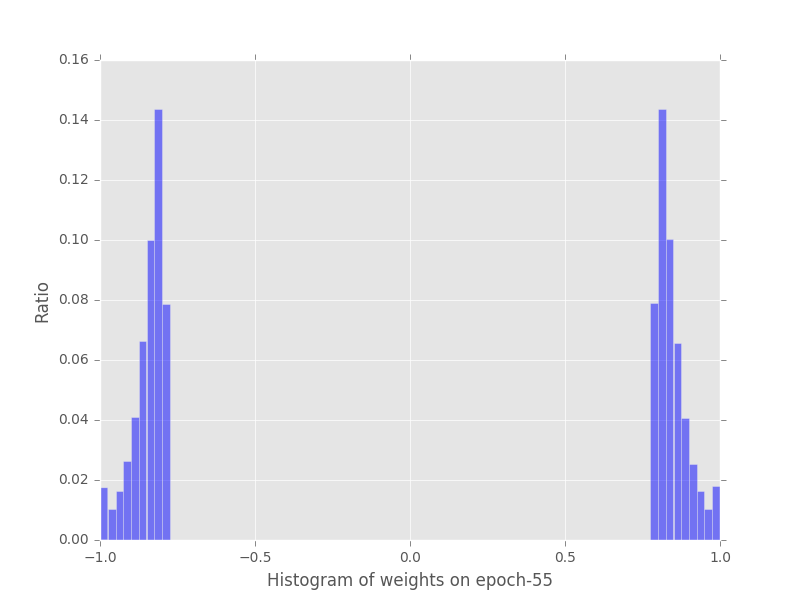}}
	\caption{Histogram of weights of layer ``conv2'' of BNN model at epoch 5, 25 and 55, where the y-axis is in logarithmic scale.}
	\label{weight-bnn} 
\end{figure*}

According to the related work of Courbariaux \emph{et al}. \cite{courbariaux2016binarized}, they do not quantize the input of the first convolutional layer, as they find that this behaviour will cause a significant degradation in classification accuracy compared to quantize other convolutional layers. On the other hand, the input of the first convolutional layer is the image itself, and quantizing the image recklessly will bring information loss. For these two reasons and observations, we don't deal with the input of the first convolutional layer. Of course, there is no reason not to quantize the input of other convolutional layers. Unlike the work of Zhou \emph{et al}. \cite{zhou2016dorefa}, they are not quantizing the output of the final fully-connected layer. In our opinion, no other layer is special (including all the input and output of convolutional and fully-connected layers) except the input of the first convolutional layer, so we have quantized them all.

In addition, in the related works of Zhou \emph{et al}. \cite{zhou2016dorefa} and Han \emph{et al}. \cite{han2015learning}, they are not quantizing the weights of the first convolutional layer because of the influence on classification accuracy. Nevertheless, from the consideration of computational complexity and parameter capacity, we quantize all the weights across the entire network.

Following the related works of Zhou \emph{et al}. \cite{zhou2016dorefa} and Courbariaux \emph{et al}. \cite{courbariaux2016binarized}, we also add the Batch Normalization (BN)~\cite{ioffe2015batch} in convolutional and fully-connected layers, as it is conducive to reducing the overall impact of the weight scale and accelerating the training. There are two convolutional layer constructs in MINW-Net, which are \{QuantizedConv-pooling-BN-HardTanh\} with Pooling layer and \{QuantizedConv-BN-HardTanh\} without Pooling layer respectively, where HardTanh is the ``hard tanh'' function: $\max \left(-1, \min \left(1, x\right)\right)$ whose role is reflected on $\mathbf{1}_{|a_i| \leq 1}$ in \textbf{Eq}~(\ref{STE}).

\section{Experiment}

\subsection{Histogram of the Asymptotic Behaviour}

\begin{figure*}[!htbp]
	\centering
	\subfigure[]{\includegraphics[width=0.3\textwidth]{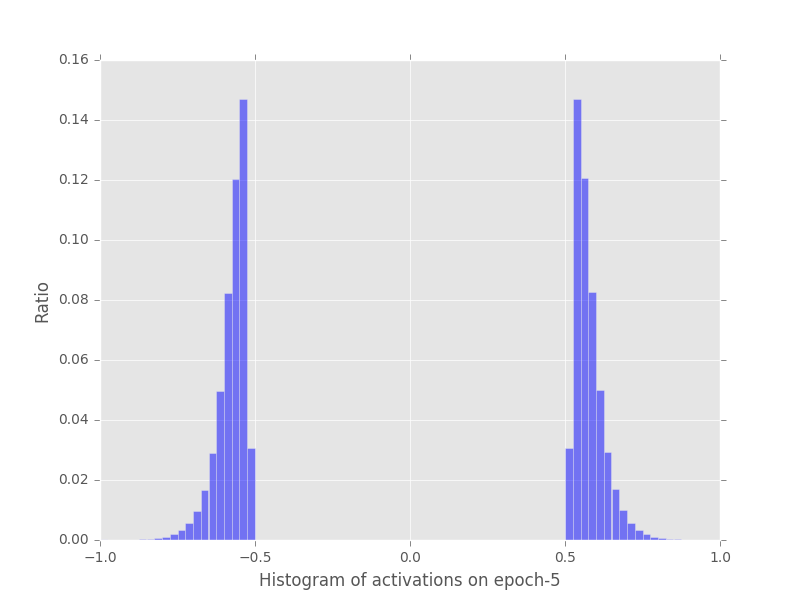}}
	\subfigure[]{\includegraphics[width=0.3\textwidth]{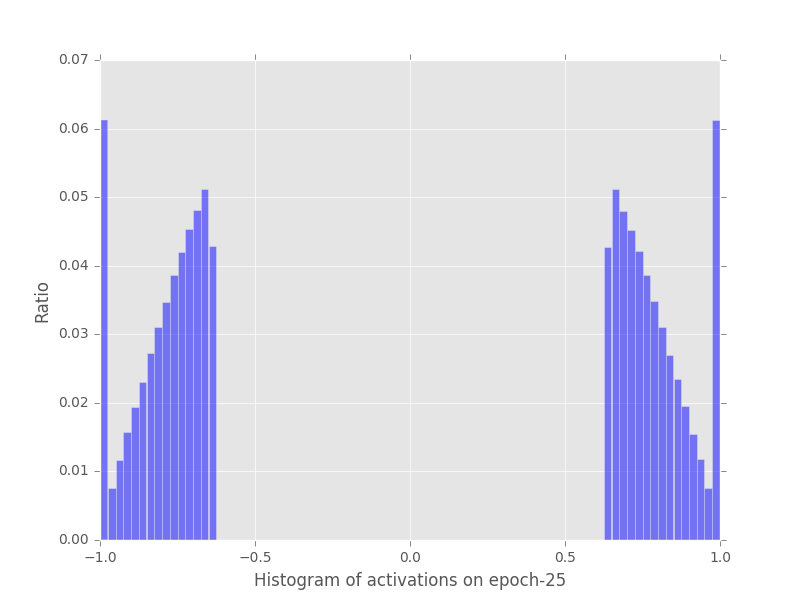}}
	\subfigure[]{\includegraphics[width=0.3\textwidth]{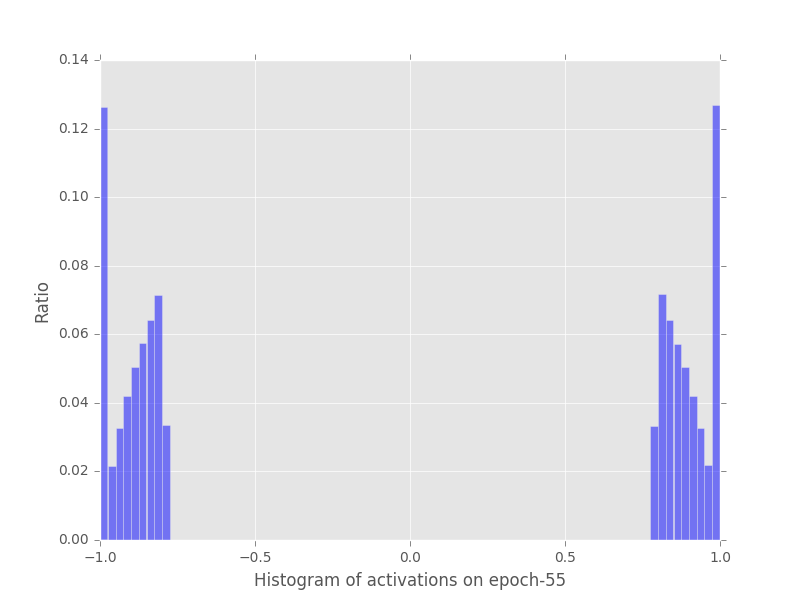}}
	\caption{Histogram of activations of layer ``conv2'' of BNN model at epoch 5, 25 and 55, where the y-axis is in logarithmic scale.}
	\label{activation-bnn} 
\end{figure*}
\begin{figure*}[!htbp]
	\centering
	\subfigure[]{\includegraphics[width=0.3\textwidth]{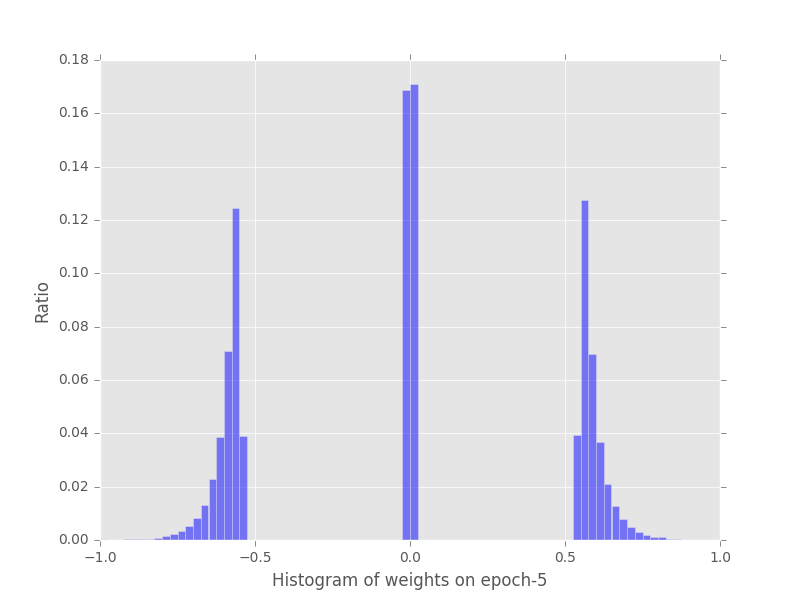}}
	\subfigure[]{\includegraphics[width=0.3\textwidth]{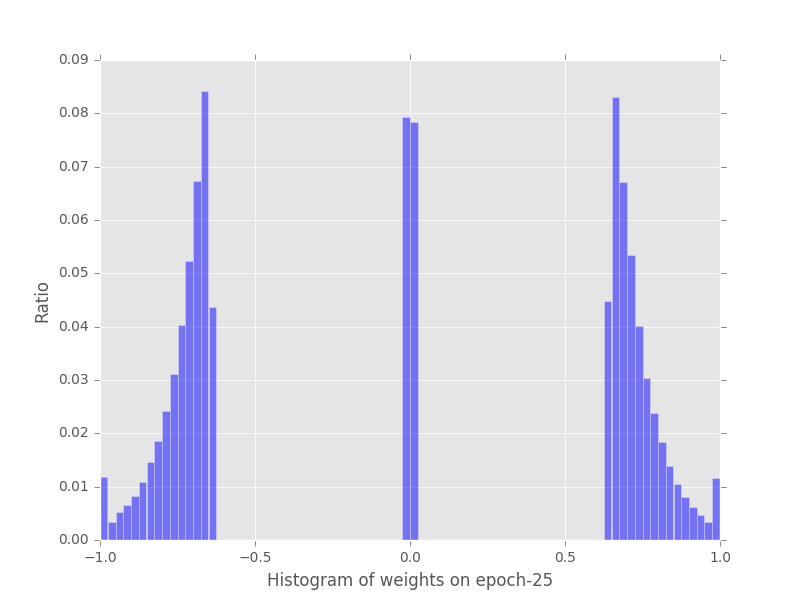}}
	\subfigure[]{\includegraphics[width=0.3\textwidth]{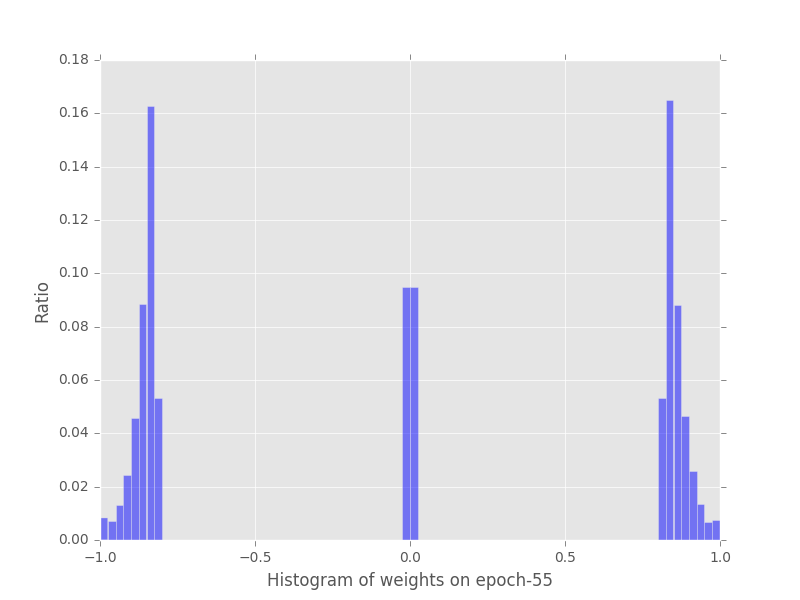}}
	\caption{Histogram of weights of layer ``conv2'' of TNN model at epoch 5, 25 and 55, where the y-axis is in logarithmic scale.}
	\label{weight-tnn} 
\end{figure*}
\begin{figure*}[!htbp]
	\centering
	\subfigure[]{\includegraphics[width=0.3\textwidth]{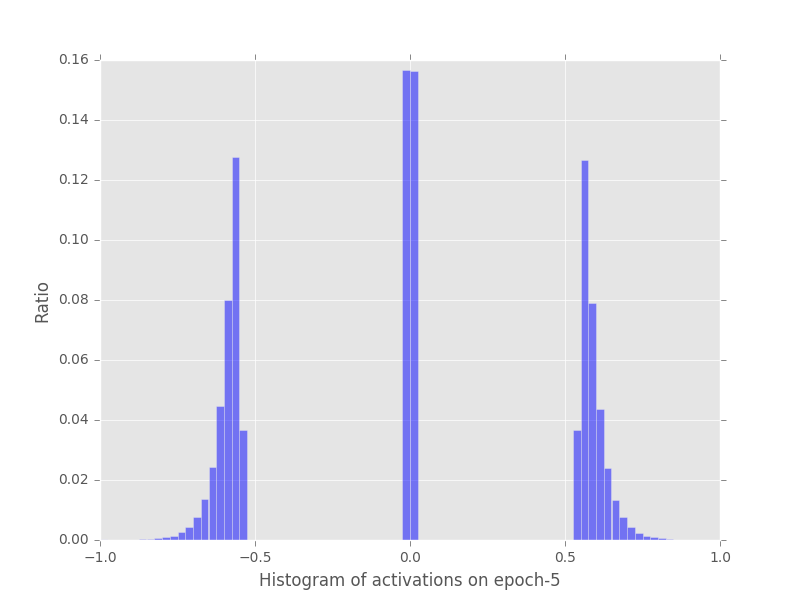}}
	\subfigure[]{\includegraphics[width=0.3\textwidth]{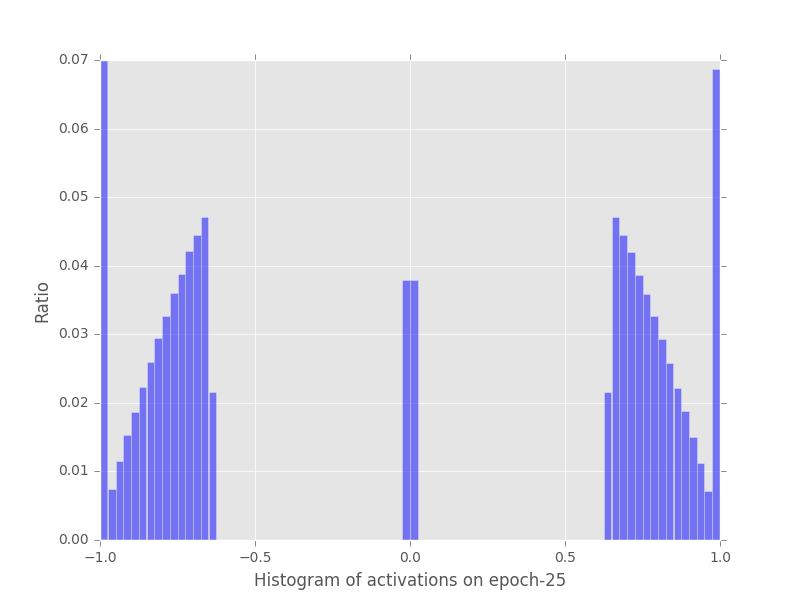}}
	\subfigure[]{\includegraphics[width=0.3\textwidth]{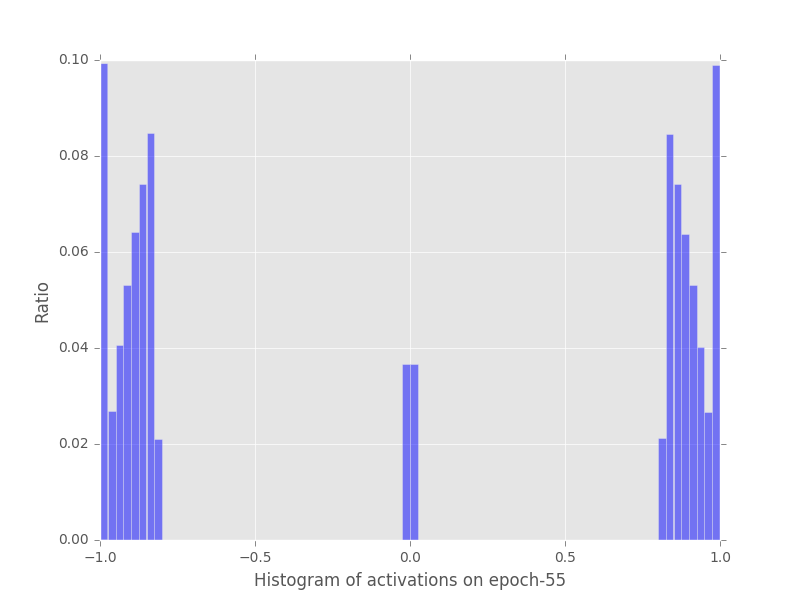}}
	\caption{Histogram of activations of layer ``conv2'' of TNN model at epoch 5, 25 and 55, where the y-axis is in logarithmic scale.}
	\label{activation-tnn} 
\end{figure*}

We have deduced our AQE by \textbf{Theorem}~\ref{thm1} and \textbf{Theorem}~\ref{thm2} in the previous section. However, the weights and activations used in these two theorems are still continuous values with full-precision. Then, we use \textbf{Theorem}~\ref{thm3} to prove that there is an asymptotic behaviour in the training process, that is, these weights and activations will asymptotically approach the quantized value. After the training, they will all be quantized.

Taking BNN and TNN for example, we observe the evolution of their distribution by extracting the weights and activations of different epochs in a convolutional layer during training. In BNN whose weights and activations will be constrained to $\{-1,1\}$, the \textbf{Fig}~\ref{weight-bnn}~\ref{activation-bnn} detail the distribution of weights and activations at epoch 5, 25 and 55. In TNN whose weights and activations will be constrained to $\{-1,0,1\}$, the \textbf{Fig}~\ref{weight-tnn}~\ref{activation-tnn} detail the distribution of weights and activations at epoch 5, 25 and 55. From the figures above, as the number of epochs increases, the weights and activations approach the quantized values gradually, and the proportion of the quantized values is increasing. This experiment also validates the asymptotic behaviour of \textbf{Theorem}~\ref{thm3}, which means that our asymptotic-quantized estimator is well defined.

\subsection{Comparison of performance between AQE and STE}
\begin{figure*}[htbp]
	\centering
	\subfigure[]{\includegraphics[width=0.48\textwidth]{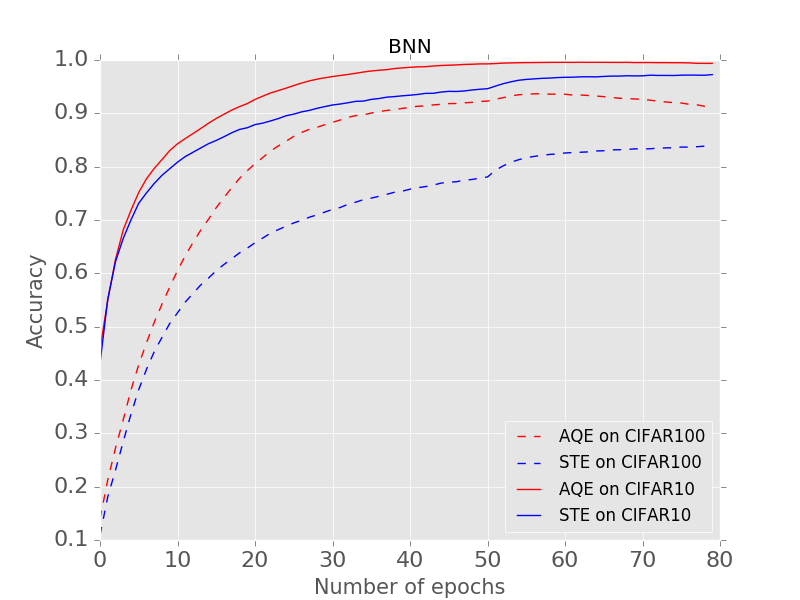}}
	\subfigure[]{\includegraphics[width=0.48\textwidth]{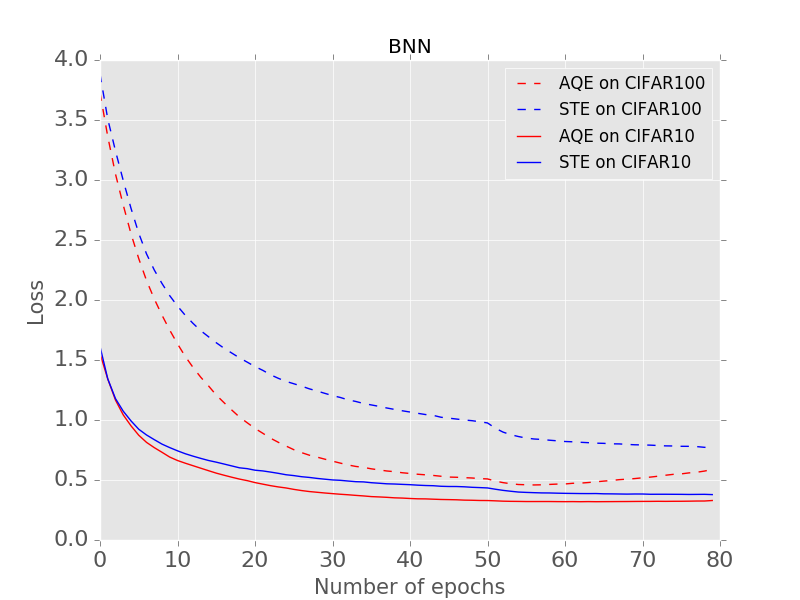}}
	\caption{Left: the accuracy curves for STE and AQE approaches applied to the CIFAR10 and CIFAR100 datesets on the training set. Right: the loss curves for STE and AQE approaches applied to the CIFAR10 and CIFAR100 datesets on the training set.}
	\label{train} 
\end{figure*}

\renewcommand\arraystretch{1.8}
\begin{table*}[htbp]
	\caption{Comparison of classification accuracy on the test set for CIFAR10 with different bitwidth in MINW-Net. We just remove the quantization layers when the precision is 32.}
	\label{cifar10} 
	\begin{center}
		\begin{tabular}{p{50pt}p{50pt}p{70pt}p{70pt}p{35pt}p{35pt}p{35pt}p{35pt}}
			\toprule
			\textbf{M-bit Inputs }& \textbf{N-bit Weights} & \textbf{Inference operation} & \textbf{Inference computational precision} & \textbf{Model A Accuracy} & \textbf{Model B Accuracy} & \textbf{Model C Accuracy} & \textbf{Model D Accuracy} \\
			\hline
			\hline
			1 & 1 & XNOR & 1-bit fixed & 0.884 & 0.851 & 0.847 & 0.844 \\
			\hline
			1 & 2 & XNOR & 1-bit fixed & 0.889 & 0.866 & 0.859 & 0.855 \\
			\hline
			1 & 3 & XNOR & 2-bit fixed & 0.897 & 0.873 & 0.868 & 0.867 \\
			\hline
			1 & 16 & XNOR ADDER & 16-bit floating & 0.901 & 0.876 & 0.870 & 0.871 \\
			\hline
			\hline
			2 & 1 & XNOR & 1-bit fixed & 0.891 & 0.875 & 0.868 & 0.867 \\
			\hline
			2 & 2 & XNOR & 1-bit fixed & 0.898 & 0.879 & 0.873 & 0.871 \\
			\hline
			2 & 3 & XNOR & 2-bit fixed & 0.903 & 0.888 & 0.888 & 0.881 \\
			\hline
			2 & 16 & XNOR ADDER & 16-bit floating & 0.909 & 0.890 & 0.891 & 0.887 \\
			\hline
			\hline
			3 & 1 & XNOR & 2-bit fixed & 0.890 & 0.889 & 0.880 & 0.880\\
			\hline
			3 & 2 & XNOR & 2-bit fixed & 0.895 & 0.892 & 0.887 & 0.888\\
			\hline
			3 & 3 & SHIFT & 2-bit fixed & $\mathbf{0.906}$ & $\mathbf{0.898}$ & $\mathbf{0.891}$ & $\mathbf{0.893}$ \\
			\hline
			3 & 16 & SHIFT ADDER & 16-bit floating & 0.911 & 0.901 & 0.896& 0.895\\
			\hline
			\hline
			16 & 1 & XNOR ADDER & 16-bit floating & 0.908 & 0.890 & 0.886 & 0.887 \\
			\hline
			16 & 2 & XNOR ADDER & 16-bit floating & 0.908 & 0.894 & 0.892 & 0.891 \\
			\hline
			16 & 3 & SHIFT ADDER & 16-bit floating & 0.911 & 0.901 & 0.904 & 0.903 \\
			\hline
			\hline
			32 & 32 & MAC & 32-bit floating & 0.918 & 0.910 & 0.912 & 0.910\\
			\bottomrule
		\end{tabular}
	\end{center}
\end{table*}

\renewcommand\arraystretch{1.8}
\begin{table*}[htbp]
	\caption{Comparison of classification accuracy on the test set for CIFAR100 with different bitwidth in MINW-Net. We just remove the quantization layers when the precision is 32.}
	\label{cifar100} 
	\begin{center}
		\begin{tabular}{p{50pt}p{50pt}p{70pt}p{70pt}p{35pt}p{35pt}p{35pt}p{35pt}}
			\toprule
			\textbf{M-bit Inputs }& \textbf{N-bit Weights} & \textbf{Inference operation} & \textbf{Inference computational precision} & \textbf{Model A Accuracy} & \textbf{Model B Accuracy} & \textbf{Model C Accuracy} & \textbf{Model D Accuracy} \\
			\hline
			\hline
			1 & 1 & XNOR & 1-bit fixed & 0.596 & 0.554 & 0.546 & 0.530 \\
			\rowcolor{mygray}1 & 1 & XNOR & 1-bit fixed & 0.596 & 0.552 & 0.535 & 0.515 \\
			\hline
			1 & 2 & XNOR & 1-bit fixed & 0.602 & 0.559 & 0.551 & 0.542\\
			\hline
			1 & 3 & XNOR & 2-bit fixed & 0.611 & 0.564 & 0.557 & 0.550\\
			\hline
			1 & 16 & XNOR ADDER & 16-bit floating & 0.620 & 0.569 & 0.560 & 0.555\\
			\hline
			\hline
			2 & 1 & XNOR & 1-bit fixed & 0.618 & 0.586 & 0.580 & 0.564\\
			\hline
			2 & 2 & XNOR & 1-bit fixed & 0.622 & 0.590 & 0.588 & 0.569 \\
			\hline
			2 & 3 & XNOR & 2-bit fixed & $\mathbf{0.631}$ & $\mathbf{0.603}$ & $\mathbf{0.592}$ & $\mathbf{0.582}$ \\
			\hline
			2 & 16 & XNOR ADDER & 16-bit floating & 0.633 & 0.607 & 0.595 & 0.585\\
			\hline
			\hline
			3 & 1 & XNOR & 2-bit fixed & 0.593 & 0.565 & 0.548 & 0.544 \\
			\hline
			3 & 2 & XNOR & 2-bit fixed & 0.606 & 0.582 & 0.565 & 0.560 \\
			\hline
			3 & 3 & SHIFT & 2-bit fixed & 0.618 & 0.601 & 0.581 & 0.573 \\
			\hline
			3 & 16 & SHIFT ADDER & 16-bit floating & 0.621 & 0.610 & 0.585 & 0.579\\
			\hline
			\hline
			16 & 1 & XNOR ADDER & 16-bit floating & 0.631 & 0.618 & 0.597 & 0.584\\
			\hline
			16 & 2 & XNOR ADDER & 16-bit floating & 0.635 & 0.622 & 0.608 & 0.598 \\
			\hline
			16 & 3 & SHIFT ADDER & 16-bit floating & 0.640 & 0.630 & 0.619 & 0.614 \\
			\hline
			\hline
			32 & 32 & MAC & 32-bit floating & 0.646 & 0.634 & 0.630 & 0.623 \\
			\bottomrule
		\end{tabular}
	\end{center}
\end{table*}

We use CIFAR10 and CIFAR100 datesets to explore the performance of AQE, where the network structure is the same BNN.

The two CIFAR datasets \cite{krizhevsky2009learning} consist of color natural images with 32$\times$32 pixels, respectively 50,000 training images and 10,000 test images, and we hold out 5,000 training images as a validation set from training set. CIFAR10 consists of images organized into 10 classes, and CIFAR100 into 100 classes. We adopt a standard data augmentation scheme (random corner cropping and random flipping) that is widely used for these two datasets \cite{lin2013network,lee2015deeply,springenberg2014striving,srivastava2015training}. We normalize the images using the channel means and standard deviations in preprocessing.

In this experiment, we perform the classification accuracy by comparing the AQE with STE on CIFAR10 and CIFAR100. The evaluation of the experiment is based on ConvNet, which is used by Courbariaux \emph{et al}. \cite{courbariaux2016binarized}, where the stochastic neuron output is \textbf{Eq}~(\ref{sign}) in STE and \textbf{Eq}~(\ref{ours}) in AQE respectively. On the \textbf{Fig}~\ref{train}, we test the two estimators using the same conditions\footnote{We need to correct an ambiguity. In the previous section, we have mentioned that our AQE uses the weights and activations with full-precision on inference during the training process, as they will participate in the loss calculation of the updated parameters. But in this experiment, in order to be consistent with STE, we will use the binarized weights and activations to obtain accuracy and loss.} (including learning rate, network structure and number of epochs), and conclude that our AQE performs better in both accuracy and loss than the STE.

\subsection{Low-bitwidth Exploration}

In this section, we use AQE to explore the best configuration for different combinations of weights and activations precisions on CIFAR datasets.

\textbf{ConvNet:} The basic structure of ConvNet~ \cite{courbariaux2016binarized} consists of two layers, one of which is feature extraction layer. The input of each neuron is connected to the local receptive field of the previous layer, and the local features are extracted. The second is the feature mapping layer, Sigmoid or ReLU function is used as the activation function of ConvNet in feature mapping structure, which makes feature mapping have displacement invariance. Each convolutional layer in ConvNet is followed by a pooling layer used to calculate local average and quadratic extraction.

\textbf{ResNet:} The main idea of ResNet~\cite{he2016deep} is to add a direct connection channel to the network, namely Highway Network~\cite{srivastava2015highway}. The previous network structure is a non-linear transformation of the input, while Highway Network retains a certain proportion of the output of the previous layer. Similarly, ResNet allows the original input information to be passed directly to the later layers, whose idea paves the way for a very deep network (more than 100 layers) to train.

\textbf{DenseNet:} It is inspired by ResNet and Highway Network, and proposes to transform the input of each convolutional layer into the splicing of the output of all previous layers~\cite{huang2017densely}. Such a dense connection makes it possible for each layer to take advantage of all the features previously learned, without the need for repetitive learning. At the same time, it imitates the structure of ResNet to make the gradient spread better, which makes it more convenient to train deep networks.

\renewcommand\arraystretch{1.8}
\begin{table}[htpb]
	\caption{Comparison of test error on CIFAR10 (100) between 3-bit weights and 32-bit float weights, where the results are given based on ResNet and DenseNet.}
	\label{quantization_error} 
	\begin{center}
		\begin{tabular}{|c|c|c|c|c|}
			\hline
			\textbf{Network} & \textbf{Depth} & \textbf{Dataset} & \textbf{Bitwidth} & \textbf{Test error (\%)} \\
			\hline
			\multirow{4}{*}{\textbf{ResNet}~\cite{he2016deep}} & \multirow{4}{*}{110} & CIFAR10 & 32 (float) & 6.61\cr\cline{3-5}
			&& CIFAR10 & 3 & 7.05 (+0.44)\cr\cline{3-5}
			&& CIFAR100 & 32 (float) & 35.87\cr\cline{3-5}
			&& CIFAR100 & 3 & 37.16 (+1.29)\\
			\hline
			\multirow{4}{*}{\textbf{DenseNet}~\cite{huang2017densely}} & \multirow{4}{*}{100} & CIFAR10 & 32 (float) & 4.51\cr\cline{3-5}
			&& CIFAR10 & 3 &5.21 (+0.70)\cr\cline{3-5}
			&& CIFAR100 & 32 (float) & 22.27\cr\cline{3-5}
			&& CIFAR100 & 3 & 23.70 (+1.43)\\
			\hline
		\end{tabular}
	\end{center}
\end{table}

For the convolutional and fully-connected layers in our MINW-Net, we have listed M-bit Inputs and N-bit Weights, inference operation, inference computational precision and classification accuracy of different models in \textbf{Table}~\ref{cifar10}~\ref{cifar100}. When the bitwidth of inputs and weights are both $1$, our MINW-Net is degenerated to BNN, and its inference operation is XNOR and computational precision is 1-bit fixed-point number. When the bitwidth of inputs and weights are both $2$, our MINW-Net is degenerated to TNN, and the inference operation and computational precision are consistent with BNN. In this experiment, \textbf{Eq}~(\ref{sign}) is used for 1-bit bitwidth where the quantized values are constrained to $\{-1,1\}$, \textbf{Eq}~(\ref{tnn}) is used for 2-bit bitwidth where the quantized values are constrained to $\{-1,0,1\}$ and \textbf{Eq}~(\ref{qnn}) is used for 3-bit bitwidth where the quantized values are constrained to $\{-1,-2^{-1},-2^{-2},0,2^{-2},2^{-1},1\}$.

We use several CNN models on CIFAR datasets to evaluate the performance of MINW-Net. Model A is the ConvNet that costs about 1.23 M parameter capacity for 1-bit weights, and it consists of six convolutional layers and three full-connected layers. Model B is derived from Model A by reducing the number of channels by half for all six convolutional layers, which costs about 0.54 M parameter capacity. Model C continues to reduce the number of channels by half for all three fully-connected layers based on Model B, which costs 0.30 M parameter capacity. Model D has the least parameter capacity here that costs 0.21 M by reducing the number of channels by half for all three fully-connected layers based on Model C. These models are trained with a batch size of 256 and a learning rate of 0.01 whose learning rule is ADAM \cite{kingma2014adam} with the exponential decay. The classification accuracy on the listed test set is the results of the model training over 200 epochs.

\renewcommand\arraystretch{1.8}
\begin{table*}[htbp]
	\caption{Comparison of classification accuracy on ImageNet test set with different bitwidths of weights and activations. Top-1 and top-5 accuracy for single-crop evaluation results are given based on AlexNet. Note that the ``Ours" results are implemented by our MINW-Net. Other results are reported by \cite{sze2017efficient}. We quantize the same layers of AlexNet to low-precision, just like BNN~\cite{courbariaux2016binarized}, BC~\cite{courbariaux2015binaryconnect}, TWN~\cite{li2016ternary}, TNN~\cite{mellempudi2017ternary} and DoReFa-Net~\cite{zhou2016dorefa}. Top-1 accuracy for full-precision AlexNet is 56.6\% and top-5 accuracy is 80.2\%.}
	\label{alexNet} 
	\begin{center}
		\begin{tabular}{|c|c|c|c|c|c|c|}
			\hline
			\multicolumn{2}{|c|}{\multirow{2}{*}{\textbf{Reduce Precision Method}}}&
			\multicolumn{2}{c|}{\textbf{Bitwidth}}&\multirow{2}{*}{\textbf{Inference Operation}}&\multirow{2}{*}{\textbf{\makecell{Top-1 Accuracy loss\\ vs. 32-bit float (\%)}}}&\multirow{2}{*}{\textbf{\makecell{Top-5 Accuracy loss\\ vs. 32-bit float (\%)}}}\cr\cline{3-4}
			\multicolumn{2}{|c|}{}&\textbf{Weights}&\textbf{Inputs}& & &\cr
			\hline
			\hline
			\multirow{4}{*}{\textbf{Reduce Weights}}&\textbf{BinaryConnect}~\cite{courbariaux2015binaryconnect}&1&32 (float)&XNOR ADDER&19.8&18.2\cr\cline{2-7}
			&Ours&1&16 (float)&XNOR ADDER&8.0&6.8\cr\cline{2-7}
			&\textbf{Ternary Weight Network}~\cite{li2016ternary}&2&32 (float)&XNOR ADDER&3.7&3.6\cr\cline{2-7}
			&Ours&2&16 (float)&XNOR ADDER&3.0&2.5\cr\hline
			\hline
			\multirow{9}{*}{\textbf{\makecell{Reduce Weights\\ and Inputs}}}&\textbf{Binary Neural Network}~\cite{courbariaux2016binarized}&1&1&XNOR&28.7&29.8\cr\cline{2-7}
			&\textbf{DoReFa-Net}~\cite{zhou2016dorefa}&1&1&XNOR&17.1&19.1\cr\cline{2-7}
			&Ours&1&1&XNOR&21.8&20.1\cr\cline{2-7}
			&\textbf{DoReFa-Net}~\cite{zhou2016dorefa}&1&2&XNOR&10.5&11.4\cr\cline{2-7}
			&Ours&1&2&XNOR&9.8&10.1\cr\cline{2-7}
			&\textbf{Ternary Neural Network}~\cite{mellempudi2017ternary}&2&8 (float)&XNOR ADDER&7.6&7.4\cr\cline{2-7}
			&Ours&2&8 (float)&XNOR ADDER&3.2&2.9\cr\cline{2-7}
			&\textbf{DoReFa-Net}~\cite{zhou2016dorefa}&8 &8 &MAC&3.6&3.4\cr\cline{2-7}
			&Ours&3&8 (float)&SHIFT ADDER&{\bf 0.6}&{\bf 0.7}\cr
			\hline
		\end{tabular}
	\end{center}
\end{table*}

\renewcommand\arraystretch{1.8}
\begin{table}[htbp]
	\caption{Top-1 and top-5 error (\%) with ResNet18 on ImageNet. Note that the ``Ours" results are implemented by our MINW-Net. Other results are reported by \cite{louizos2018relaxed,jung2019learning,zhang2018lq}.}
	\label{ResNet} 
	\begin{center}
		\begin{tabular}{ccccccc}
			\toprule
			\multicolumn{7}{c}{\textbf{ResNet18}} \\
			\hline
			\textbf{Method} & \textbf{\makecell{\# Bits \\ weights/inputs}} & \footnotemark[1] & \footnotemark[2] & \footnotemark[3] & \textbf{Top-1} & \textbf{Top-5} \\
			\hline
			\textbf{Original} & 32/32 & - & - & - & 30.46 & 10.81 \\
			\hline
			\multirow{4}{*}{\textbf{Rounding}} & 8/8 & - & - & - & 30.22 & 10.60 \\
			& 6/6 & - & - & - & 31.61 & 11.32 \\
			& 5/5 & - & - & - & 36.97 & 14.95 \\
			& 4/4 & - & - & - & 78.79 & 57.10 \\
			\hline
			\multirow{3}{*}{\textbf{LQ-Nets}~\cite{zhang2018lq}} & 2/32 & \checkmark & \checkmark & - & 32.00 & 11.80 \\
			& 3/32 & \checkmark & \checkmark & - & 30.70 & 11.10 \\
			& 1/2 & \checkmark & \checkmark & - & 37.40 & 15.50 \\
			\hline
			\multirow{3}{*}{\textbf{SR+DR}~\cite{gupta2015deep,gysel2018ristretto}} & 8/8 & - & - & - & 31.83 & 11.48 \\
			& 6/6 & - & - & - & 40.75 & 16.90 \\
			& 5/5 & - & - & - & 45.48 & 20.16 \\
			\hline
			\multirow{2}{*}{\textbf{LR Net}~\cite{shayer2017learning}} & 1/32 & \checkmark & \checkmark & - & 40.10 & 17.70 \\
			& 2/32 & \checkmark & - & - & 36.50 & 15.20 \\
			\hline
			\multirow{2}{*}{\textbf{ELQ}~\cite{zhou2018explicit}} & 1/32 & - & - & - & 35.28 & 13.96 \\
			& 2/32 & - & - & - & 32.48 & 11.95 \\
			\hline
			\multirow{2}{*}{\textbf{SYQ}~\cite{faraone2018syq}} & 1/8 & \checkmark & \checkmark & - & 37.10 & 15.40 \\
			& 2/8 & \checkmark & \checkmark & - & 32.30 & 12.20 \\
			\hline
			\textbf{TWN}~\cite{li2016ternary} & 2/32 & - & - & - & 38.20 & 15.80 \\
			\textbf{INQ}~\cite{zhou2017incremental} & 5/32 & - & - & - & 31.02 & 10.90 \\
			\textbf{BWN}~\cite{rastegari2016xnor} & 1/32 & - & - & - & 39.20 & 17.00 \\
			\textbf{XNOR-Net}~\cite{rastegari2016xnor} & 1/1 & - & - & - & 48.80 & 26.80 \\
			\textbf{HWGQ}~\cite{cai2017deep} & 1/2 & \checkmark & \checkmark & \checkmark & 40.40 & 17.80 \\
			\textbf{DoReFa-Net}~\cite{zhou2016dorefa} & 1/4 & \checkmark & \checkmark & - & 40.80 & 18.50 \\
			\hline
			\multirow{4}{*}{\textbf{MINW-Net} (Ours)} & 2/32 & - & - & - & 36.36 & 15.10 \\
			& 1/32 & - & - & - & 37.60 & 15.81 \\
			& 1/2 & \checkmark & \checkmark & - & 42.16 & 18.90 \\
			& 1/1 & \checkmark & - & - & 47.87 & 26.04 \\
			\bottomrule
		\end{tabular}
	\end{center}
\end{table} 

The first two lines listed in \textbf{Table}~\ref{cifar100} are the same network structure with 1-bit weights and 1-bit activations, where the first line is the prediction accuracy trained from scratch with our AQE and the second line is the prediction accuracy trained from scratch with the STE. We evaluate the performance of AQE by 1-bit weights and activations MINW-Net on CIFAR100 dataset. As the size of the model decreases, the advantages of using AQE gradually emerge. Due to the reduction in network redundancy, the QNNs using AQE have a higher accuracy than using STE. We achieve the prediction accuracy improvement of 0.0\%, 0.2\%, 1.1\% and 1.5\% on Model A, B, C and D respectively.

The trends in \textbf{Table}~\ref{cifar10}~\ref{cifar100} show that the number of channels affects the prediction accuracy. Although MINW-Net with low-precision weights and activations can cause degradation in prediction accuracy, the tiny degradation in accuracy can be ignored compared to the much reduced resource requirement and increased computational efficiency. For CIFAR10, the best performance is MINW-Net with 3-bit weights and 3-bit activations. We achieve the prediction accuracy degradation of 1.2\%, 1.2\%, 2.1\% and 1.7\% on Model A, B, C and D respectively compared with 32-bit counterparts. For CIFAR100, the best performance is MINW-Net with 3-bit weights and 2-bit activations. We achieve the prediction accuracy degradation of 1.5\%, 3.1\%, 4.0\% and 4.1\% on Model A, B, C and D respectively compared with 32-bit counterparts. When running inference, the inference operation is determined as XNOR, SHIFT or MAC according to the smaller bitwidth between weights and activations, and the inference computational precision is based on the larger bitwidth between weights and activations, where MAC represents multiply accumulate operation.

For ResNet, we use a weight decay of 0.0001, a momentum of 0.9 and BN without dropout. The model is trained with a mini-batch size of 128 and a learning rate of 0.1, divided by 10 at 32k and 38k iterations, and terminates at 64k iterations. Seen from the experiments of ResNet in \textbf{Table}~\ref{quantization_error}, our MINW-Net achieves the test error rates of 7.05\% on CIFAR10, 37.16\% and on CIFAR100, just rises 0.44\% on CIFAR10 and 1.29\% on CIFAR100 compared with 32-bit float ResNet.

For DenseNet, we use a weight decay of 1e-4, a momentum of 0.9 and BN without dropout. The initial learning rate for this model is 0.1, and the model is divided by 10 at 50\% and 75\% of the total number of training periods. And we use a batch size of 64 for a total of 300 periods on CIFAR. Compared with 32-bit float DenseNet, the test error of our MINW-Net on CIFAR10 increases by 0.70\% (from 4.51\% to 5.21\%) and on CIFAR100 increases by 1.43\% (from 22.27\% to 23.70\%), as shown in \textbf{Table}~\ref{quantization_error}.

\subsection{ImageNet}

ImageNet data set is a large image data set organized by professor Fei Fei Li of Stanford university covering all aspects of computer vision. The dataset used for image classification is the ILSVRC2012~\cite{deng2009imagenet} image classification dataset, which identifies the main objects in the image. To further evaluate the effect of our MINW-Net on ILSVRC2012 image classification dataset which contains 1.2 million high-resolution natural images from 1000 categories, we resize these images to 224$\times$224 pixels before input them to the network. In the following experiments, we use top-1 and top-5 accuracy to measure our single-crop evaluation results.

\textbf{AlexNet:} It is the first CNN structure to show success and wide attention on the ImageNet classification task. The architecture consists of 61 million parameters and 65,000 neurons, and is composed of 5 convolutional layers and 2 fully connected layers~\cite{Krizhevsky2012ImageNet}. The output layer is a 1000 channel softmax. We use AlexNet in combination with BN.

In the training, the input image is cropped with a random size, the clipping is adjusted to 256$\times$256 pixels, and then 224$\times$224 images are extracted randomly for training. We train MINW-Net for 50 epochs based on AlexNet, with a batch size of 128. In our AlexNet implementation, we use an ADAM optimizer with faster convergence and the learning rate of 1e-4. We replace the ``Local Contrast Renormalization" layer with the ``Batch Normalization" layer. At inference, we use 224$\times$224 center crop for forward propagation.

The ablation experiments are listed in \textbf{Table}~\ref{alexNet}. It is reported in \cite{rastegari2016xnor} that the accuracy score of baseline AlexNet model is 56.6\% for top-1 and 80.2\% for top-5. In the ablation study, in addition to the difference of quantization methods, we strictly control the consistency of variables such as network structure, bit width and quantization layers. In experiments of ``1-1" v.s. ``1-1" for BNN, ``1-16" v.s. ``1-32" for BC, ``2-16" v.s. ``2-32" for TWN, ``2-8" v.s. ``2-8" for TNN and ``3-8" v.s. ``8-8" for DoReFa-Net, our MINW-Net top-1 accuracy improved by 6.9\%, 11.8\%, 0.7\%, 4.4\% and 3.0\% respectively. For ``3-8" v.s. ``32-32", our MINW-Net only reduces the top-1 accuracy by 0.6\%.

In \textbf{Table}~\ref{ResNet}, we list our MINW-Net results for top-1 and top-5 and compare against multiple recent works on ResNet-18. The ablation experiments employ all fixed-point weights quantization and part fixed-point activations quantization: SR+DR~\cite{gupta2015deep,gysel2018ristretto}, TWN~\cite{li2016ternary}, BWN~\cite{rastegari2016xnor}, XNOR-Net~\cite{rastegari2016xnor}, DoReFa-Net~\cite{zhou2016dorefa}, INQ~\cite{zhou2017incremental}, HWGQ~\cite{cai2017deep}, LR-Net~\cite{shayer2017learning}, SYQ~\cite{faraone2018syq}, ELQ~\cite{zhou2018explicit} and rounding. The instructions of quantization are mentioned explicitly by footnotes.

\section{Conclusion and future work}

In this paper, we have introduced AQE, a novel estimator to propagate asymptotic-estimated gradient through stochastic neuron output, which is correlated with both the original full-precision value and the quantized value, in neural networks involving noise sources. Since our AQE has the asymptotic behaviour, the neuron output will gradually approach the quantized value as the training epochs increase (the actual performance is that the ratio of the original full-precision value is decreased and the ratio of the quantized value is increased). After the training, the output will completely convert to the quantized value. With the hyper-parameter $\alpha=\frac{1}{2}$, we have proven that our AQE will degenerate into STE and unbiased estimator. We have proposed MINW-Net, a quantized neural network with M-bit Inputs and N-bit Weights, which is trained by AQE. All of the activations and weights during forward passes are quantized, because there is no layer is special in MINW-Net. When the smaller bitwidth between weights and activations are 1 or 2, all the convolution operations in inference can be replaced by XNOE operations. Similarly, all the convolution operations are replaced by SHIFT operations when the smaller bitwidth is 3.
\footnotetext[1]{First layer not quantized}
\footnotetext[2]{Last layer not quantized} 
\footnotetext[3]{Modified architecture}

By experiments, we have verified the asymptotic behaviour of our AQE. And the BNN with AQE can achieve a prediction accuracy 1.5\% higher than the BNN with STE. The MINW-Net, which is trained from scratch by AQE, can achieve comparable classification accuracy as 32-bit counterparts on CIFAR test sets. Extensive experimental results on ImageNet dataset show great superiority of the proposed AQE and our MINW-Net achieves comparable results with other state-of-the-art QNNs.

Our future work should explore how to achieve the inference of MINW-Net on FPGAs where the inference operation of XNOR or SHIFT will be applied on hardware platform.


\bibliographystyle{Bibliography/IEEEtranTIE}
\bibliography{Bibliography/IEEEabrv,Bibliography/BIB_1x-TIE-2xxx}\ 

\end{document}